\renewcommand\@formatdoi[1]{\ignorespaces}
\def\adl@drawiv#1#2#3{
        \hskip.5\tabcolsep
        \xleaders#3{#2.5\@tempdimb #1{1}#2.5\@tempdimb}%
                #2\z@ plus1fil minus1fil\relax
        \hskip.5\tabcolsep}
\newcommand{\cdashlinelr}[1]{%
  \noalign{\vskip\aboverulesep
          \global\let\@dashdrawstore\adl@draw
          \global\let\adl@draw\adl@drawiv}
  \cdashline{#1}
  \noalign{\global\let\adl@draw\@dashdrawstore
          \vskip\belowrulesep}}
\begin{document}
\title{Offline Recommender System Evaluation under Unobserved Confounding}

\author{Olivier Jeunen}
\affiliation{
  \institution{ShareChat}
  \city{Edinburgh}
  \country{UK}
} 

\author{Ben London}
\affiliation{
  \institution{Amazon Music}
  \city{Seattle}
  \state{WA}
  \country{USA}
} 

\begin{abstract}

Off-Policy Estimation (OPE) methods allow us to learn and evaluate decision-making policies from logged data.
This makes them an attractive choice for the offline evaluation of recommender systems, and several recent works have reported successful adoption of OPE methods to this end.
An important assumption that makes this work is the absence of unobserved confounders: random variables that influence both actions and rewards at data collection time.
Because the data collection policy is typically under the practitioner's control, the unconfoundedness assumption is often left implicit, and its violations are rarely dealt with in the existing literature.

This work aims to highlight the problems that arise when performing off-policy estimation in the presence of unobserved confounders, specifically focusing on a recommendation use-case.
We focus on policy-based estimators, where the logging propensities are learned from logged data.
We characterise the statistical bias that arises due to confounding, and show how existing diagnostics are unable to uncover such cases.
Because the bias depends directly on the \emph{true} and unobserved logging propensities, it is non-identifiable.
As the unconfoundedness assumption is famously untestable, this becomes especially problematic.
This paper emphasises this common, yet often overlooked issue.
Through synthetic data, we empirically show how na\"ive propensity estimation under confounding can lead to severely biased metric estimates that are allowed to fly under the radar.
We aim to cultivate an awareness among researchers and practitioners of this important problem, and touch upon potential research directions towards mitigating its effects.
\end{abstract}

\maketitle

\section{Introduction \& Motivation}
Inferring cause and effect from observational data is not a straightforward task, due to the presence of ``\emph{unobserved confounders}''~\cite{Pearl2009}.
Indeed, if an unobserved variable exists that influences both the treatment and its outcome, this can easily lead to biased estimates of the treatment effect.
In the extreme case where the estimate changes sign, this is known as ``Simpson's Paradox''~\cite{Julious1994}.
\citeauthor{Bottou2013} describe how to avoid such unpleasantries in counterfactual learning scenarios, by carefully modelling the data-generating process~\cite{Bottou2013}.
\citeauthor{Jadidinejad2021} describe how an instance of Simpson's Paradox is prevalent to occur in the offline evaluation of recommender systems---as the test data is often influenced by an unknown recommendation policy that takes the role of the confounder~\cite{Jadidinejad2021}.
Their proposed solution to this problem involves \emph{Inverse Propensity Score} (IPS) weighting~\cite[Ch. 9]{Owen2013}, with propensities that are estimated from logged data.
Estimating propensities is common practice in cases where the true propensities are unknown~\cite{Yang2018,Ai2018}, and empirical results indicate that IPS with estimated propensities can even lead to favourable variance~\cite{Hanna2019}.
Whether to simplify experimental setup or to deal with missing information in publicly available datasets, these works often make the limiting assumption that the propensities are independent of user or context (see \cite[\S 6.3]{Jadidinejad2021} or \cite[\S 3.3]{Yang2018}).
In the very likely case that these assumptions are violated, this implies the presence of unobserved confounders.%
\footnote{An analogous problem occurs in ranking applications when contextual independence is incorrectly assumed for position bias estimates~\cite{Fang2019,Jeunen2023_C3PO}.}
Existing diagnostics for validating logged bandit feedback cannot detect these issues~\cite{Li2015,London2022}, as we will show formally. 

Unobserved confounding is a well-known issue in general off-policy reinforcement learning, and several methods have been proposed to deal with it in the recent literature.
They typically leverage additional data (such as interventions~\cite{Wang2021,Gasse2021} or instrumental variables~\cite{Xu2023}) or make assumptions about the nature of confounding variables~\cite{Namkoong2020,Bennett2021,Kausik2023}.
Analogously, instrumental variables have been leveraged to test for the unconfoundedness assumption~\cite{DeLuna2014}, and other statistical methods have been proposed to assess the sensitivity of results to potential confounders~\cite{Liu2013}.
Nevertheless, in the absence of additional tools, unconfoundedness is a famously \emph{untestable} assumption, rendering its effects especially troublesome.
Focusing on the off-policy bandit setting with a guiding example in recommendation, we aim to answer:
\begin{center}
    ``\emph{Can we reliably select the optimal policy from a set of competing policies, under unobserved confounding?}''
\end{center}
\section{Off-Policy Estimation in the Presence of Unobserved Confounders}
\begin{wrapfigure}{rt}{0.25\textwidth}
\centering
\vspace{-1ex}
\includegraphics{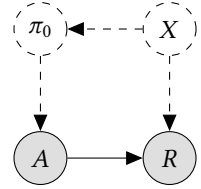}
    \caption{Probabilistic Graphical Model (PGM) for our setup.}\vspace{-2ex}
    \label{fig:PGM}
\Description[Figure described in text.]{Figure described in text.}
\end{wrapfigure}

Throughout this work, we denote random variables as $X$, with specific instances as $x \in \mathcal{X}$.
A contextual bandit problem consists of \emph{contexts} $X$ (e.g. user and context features), \emph{actions} $A$ (e.g. item recommendations), and \emph{rewards} $R$ (e.g. clicks, streams, revenue).
Rewards are causally influenced by both contexts and actions, as illustrated by the edges in the causal graph shown in Figure~\ref{fig:PGM}.
A contextual \emph{policy} $\pi$ determines which actions are selected (or sampled), thereby inducing a probability distribution over $A$, which is often denoted with the shorthand $\pi(a|x) \coloneqq \mathsf{P}(A=a|X=x; \Pi=\pi)$.
A policy's effectiveness is measured by the expected reward obtained when selecting actions according to that policy: $\mathbb{E}_{x} \mathbb{E}_{a \sim \pi(\cdot|x)}[R]$.
In a recommendation application, this value can be estimated by deploying the policy in an online experiment.
However, since such experiments are typically costly, and we may have many policies to evaluate, we would rather obtain reward estimates by other means.

Suppose there is an existing deployed policy, called the \emph{logging policy} $\pi_{0}$, with which we collect a dataset $\mathcal{D}_{0} \coloneqq \{(a_{i}, r_{i})_{i=1}^{N}\}$.
We will assume, as is often the case, that the logging policy, and the contextual covariates it uses, are unobservable, as indicated by the dashed nodes and edges in Figure~\ref{fig:PGM}.
Our goal is to leverage data logged under $\pi_{0}$ to estimate the expected reward under $\pi$---a problem often referred to as \emph{off-policy} estimation~\cite{Vasile2020,Saito2021}.

One simple off-policy estimator is the \emph{Direct Method} (DM).
DM computes the expected reward under $\pi$ (Eq.~\ref{eq:DM_policy}) using a model $\widehat{R}^A_{\rm DM}(a)$ that estimates the reward for every available action.
Since we assume that contextual covariates are unavailable, the best we can do for $\widehat{R}^A_{\rm DM}(a)$ is to na\"ively count the observed rewards for every action (Eq.~\ref{eq:DM_action}).
\begin{minipage}{0.4\textwidth}
\begin{equation}\label{eq:DM_policy}
    \widehat{R}_{\rm DM}(\pi) =  \sum_{a \in \mathcal{A}} \widehat{R}^A_{\rm DM}(a) \pi(a)
\end{equation}
\end{minipage}
\hspace{10ex}
\begin{minipage}{0.45\textwidth}
\begin{equation}\label{eq:DM_action}
    \widehat{R}^A_{\rm DM}(a) = \frac{\sum_{(a_{i},r_{i}) \in \mathcal{D}_{0}} \mathbbm{1}\{a_{i} = a\} \cdot r_{i}}{\sum_{(a_{i},r_{i}) \in \mathcal{D}_{0}} \mathbbm{1}\{a_{i} = a\}}
\end{equation}
\end{minipage}

\noindent
Unfortunately, this estimator is biased, for two reasons:
\begin{enumerate*}
    \item[(a)] it does not take into account the covariates $X$ (i.e., the model is mis-specified), and
    \item[(b)] it ignores the selection bias from the logging policy $\pi_{0}$, influencing the estimates in Eq.~\ref{eq:DM_action}.
\end{enumerate*}

In theory, we can bypass both the model mis-specification and selection bias problems by leveraging the \emph{ideal} IPS estimator (Eq.~\ref{eq:ideal_IPS_policy}), which is provably unbiased.
Importantly, ideal IPS requires access to both the contextual covariates and the exact action probabilities (\emph{propensities}) under $\pi_{0}$---which we assume are unavailable.
Accordingly, we will adopt the common practice of using estimated logging propensities $\widehat{\pi}_{0}$ for IPS (Eq.~\ref{eq:estim_IPS_policy}).
As the estimated propensities cannot properly consider all covariates, this leads to unobserved confounding.

\begin{minipage}{0.45\textwidth}
\begin{equation}\label{eq:ideal_IPS_policy} 
    \widehat{R}_{\rm ideal-IPS}(\pi) = \frac{1}{\left|\mathcal{D}_{0}\right|}\sum_{(\textcolor{Maroon}{x_{i}}, a_{i},r_{i}) \in \mathcal{D}_{0}} r_{i}\frac{\pi(a_{i})}{ \textcolor{Maroon}{\pi_{0}(a_{i}|x_{i})}}
\end{equation}
\end{minipage}
\hspace{5ex}
\begin{minipage}{0.45\textwidth}
\begin{equation}\label{eq:estim_IPS_policy} 
    \widehat{R}_{\rm estim-IPS}(\pi) = \frac{1}{\left|\mathcal{D}_{0}\right|}\sum_{(a_{i},r_{i}) \in \mathcal{D}_{0}} r_{i}\frac{\pi(a_{i})}{ \widehat{\pi}_{0}(a_{i})}
\end{equation}
\end{minipage}

Using the fact that the ideal IPS estimator is unbiased, we can quantify the bias of the estimated IPS estimator as:
\begin{equation}
       \mathbb{E}[\widehat{R}_{\rm estim-IPS}(\pi)] - \mathbb{E}_{a \sim \pi}[R]
   =
       \mathbb{E}[\widehat{R}_{\rm estim-IPS}(\pi)] - \mathbb{E}[\widehat{R}_{\rm ideal-IPS}(\pi)]
   =
       \mathbb{E}\left[R\pi(A|X) \left(\frac{1}{\widehat{\pi}_{0}(A)}- \frac{1}{\pi_{0}(A|X)} \right)\right]
   .
\end{equation}

To further illustrate our point, we resort to \citeauthor{Pearl2009}'s do-calculus framework~\cite{Pearl2009}.
What OPE methods wish to estimate is the expected value of the reward given that a new policy \emph{intervenes} on the action distribution.
When unobserved confounders are present, this \emph{interventional} quantity is \emph{not} equal to the \emph{observational} quantity we can estimate from logged data: $\mathop{\mathbb{E}}\left[R|A=a\right] \neq \mathop{\mathbb{E}}\left[R|{\rm do}(A=a)\right] $.
Instead, we would require the ``backdoor adjustment'' to obtain:
\begin{equation}
 \mathop{\mathbb{E}}\left[R|{\rm do}(A=a)\right] = \sum_{\textcolor{Maroon}{x \in \mathcal{X}}} \mathop{\mathbb{E}}\left[  R | A=a, \textcolor{Maroon}{X=x} \right] .
\end{equation}
It should be clear that without access to $X$, this estimand is non-identifiable, and this problem is not easily solved.

\section{Existing Diagnostics for Logging Propensities do not Uncover Confounding Bias}
Several diagnostics have been proposed in the literature to detect data quality issues with logged bandit feedback.
In particular, they try to uncover cases where the two classical assumptions of the IPS estimator do not hold~\cite{Li2015,London2022}: 
\begin{enumerate*}
    \item either the empirical action frequencies in the data do not match those implied by the logged propensities, or
    \item the logging policy does not have full support over the action space.
\end{enumerate*}
Note that the presence of unobserved confounders does \emph{not} automatically violate these assumptions.
As a result, the diagnostics that were proposed will \emph{not} detect confounding bias.
Logging propensities can be estimated by empirically counting logged actions, as shown in Eq.~\ref{eq:marginal_prop}.
In doing so, we obtain unbiased estimates of the true marginal action probabilities.
Indeed, $\lim_{N\to\infty} \widehat{\pi_{0}}(a)= \mathsf{P}(A=a|\Pi=\pi_0)$.

\citeauthor{Li2015} propose the use of \emph{arithmetic} and \emph{harmonic} mean tests to compare empirical action frequencies with the logging propensities~\cite{Li2015}.
As we \emph{define} the logging propensities to be equal to the empirical action frequencies, it should be clear that this test will trivially pass.
Alternatively, \citeauthor{London2022} propose to use the average importance weight as a control variate, whose expectation should equal 1 for any target policy $\pi$~\cite{London2022}.
Here as well, because the marginal propensities are unbiased (Eq.~\ref{eq:marginal_prop}), we can show that the control variate remains unbiased as well (Eq.~\ref{eq:CV_marginal}).
\begin{equation}\label{eq:marginal_prop}
    \widehat{\pi}_{0}(a)
    = \frac{1}{|\mathcal{D}_{0}|}\sum_{(a_{i},r_{i}) \in \mathcal{D}_{0}} \!\!\!\! \mathbbm{1}\{a_{i} = a\} \underset{\lim\limits_{N\to \infty}}{=} \mathsf{P}(A=a|\Pi=\pi_0) = \sum_{x \in \mathcal{X}}  \mathsf{P}(A=a|X=x, \Pi=\pi_0) \mathsf{P}(X=x).
\end{equation}

\begin{theorem}
When unobserved confounders are present and logging propensities are estimated from empirical data (thus ignoring the confounders), the expected value of the importance weights equals 1 for any target policy:
$$\mathop{\mathbb{E}}_{\substack{x \sim \mathsf{P}(X)\\ a \sim \mathsf{P}(A|X=x,\Pi=\pi_{0})}}\left[ \frac{\pi(a)}{\widehat{\pi}_0(a)} \right] = 1.$$
\end{theorem}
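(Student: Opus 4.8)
The plan is to lean entirely on the single property recorded in Eq.~\ref{eq:marginal_prop}: the empirically estimated propensity $\widehat{\pi}_0(a)$ converges to the \emph{marginal} action probability $\mathsf{P}(A=a|\Pi=\pi_0)$, i.e.\ the confounder $X$ has already been integrated out of the per-context conditional $\mathsf{P}(A=a|X=x,\Pi=\pi_0)$. Working in this population regime, where $\widehat{\pi}_0(a)=\mathsf{P}(A=a|\Pi=\pi_0)$ exactly, the confounder has effectively vanished from the denominator — which is precisely the reason the diagnostic is blind to it — and the identity then follows by a one-line cancellation.

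Concretely, I would first rewrite the double expectation in the statement as a single expectation over the marginal law of $A$ under $\pi_0$. By the law of total probability (the right-hand equality of Eq.~\ref{eq:marginal_prop}), drawing $x\sim\mathsf{P}(X)$ and then $a\sim\mathsf{P}(A|X=x,\Pi=\pi_0)$ yields an action distributed exactly as $\mathsf{P}(A=a|\Pi=\pi_0)$, so that
\begin{equation*}
\mathop{\mathbb{E}}_{\substack{x\sim\mathsf{P}(X)\\ a\sim\mathsf{P}(A|X=x,\Pi=\pi_0)}}\!\!\left[\frac{\pi(a)}{\widehat{\pi}_0(a)}\right] = \sum_{a\in\mathcal{A}}\mathsf{P}(A=a|\Pi=\pi_0)\,\frac{\pi(a)}{\widehat{\pi}_0(a)}.
\end{equation*}
Next I would substitute $\widehat{\pi}_0(a)=\mathsf{P}(A=a|\Pi=\pi_0)$, so the marginal probability cancels against the denominator of the summand, leaving $\sum_{a\in\mathcal{A}}\pi(a)$, which equals $1$ because $\pi$ is a probability distribution over $\mathcal{A}$. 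I would note in passing the usual support caveat: the ratio is only well defined when $\mathsf{P}(A=a|\Pi=\pi_0)>0$ whenever $\pi(a)>0$ — exactly the full-support condition the second diagnostic is meant to catch, and which confounding does not by itself violate.

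The algebra is trivial, so the only thing to be careful about is the \emph{modelling} claim rather than the computation. I would be explicit that the diagnostic is run with the context-free marginal estimate, and that the expectation is taken in the large-sample limit so $\widehat{\pi}_0$ may be replaced by the true marginal; for finite $N$ one would additionally invoke the unbiasedness of $\widehat{\pi}_0$ (again Eq.~\ref{eq:marginal_prop}) together with a standard consistency / continuous-mapping argument, which is where the bulk of any remaining rigour lives. The point I would stress afterwards is the conceptual one: the same cancellation that forces the control variate to equal $1$ is exactly what erases all information about the true per-context propensities $\pi_0(a|x)$, so a passing diagnostic is entirely compatible with the non-identifiable confounding bias derived in the previous section.
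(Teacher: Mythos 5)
Your proof is correct and follows essentially the same route as the paper's: both expand the double expectation, use the law of total probability to recognise the marginal $\mathsf{P}(A=a|\Pi=\pi_0)$, and invoke the large-sample identification $\widehat{\pi}_0(a)=\mathsf{P}(A=a|\Pi=\pi_0)$ from Eq.~\ref{eq:marginal_prop} so the marginal cancels, leaving $\sum_{a}\pi(a)=1$. Your added remarks on the full-support caveat and the finite-$N$ consistency argument are sensible refinements but do not change the argument.
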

\begin{proof}
    
\begin{equation}\label{eq:CV_marginal}
\begin{split}    
\mathop{\mathbb{E}}_{\substack{x \sim \mathsf{P}(X)\\ a \sim \mathsf{P}(A|X=x,\Pi=\pi_{0})}}\left[ \frac{\pi(a)}{\widehat{\pi}_0(a)} \right]
&=  \sum_{a \in \mathcal{A}} \sum_{x \in \mathcal{X}}\frac{\pi(a)}{\widehat{\pi}_0(a)} \mathsf{P}(A=a|X=x,\Pi=\pi_0)\mathsf{P}(X=x)\\
&=  \sum_{a \in \mathcal{A}} \frac{\pi(a)}{\widehat{\pi}_0(a)} \sum_{x \in \mathcal{X}} \mathsf{P}(A=a|X=x,\Pi=\pi_0)\mathsf{P}(X=x)\\
&\underset{\lim\limits_{N\to \infty}}{=}  \sum_{a \in \mathcal{A}} \pi(a)\frac{\sum_{x \in \mathcal{X}} \mathsf{P}(A=a|X=x,\Pi=\pi_0)\mathsf{P}(X=x)}{\sum_{x \in \mathcal{X}} \mathsf{P}(A=a|X=x,\Pi=\pi_0)\mathsf{P}(X=x)} = \sum_{a \in \mathcal{A}} \pi(a) = 1 \qed\\
\end{split}
\end{equation}
\end{proof}

As such, existing diagnostics are unable to detect issues of unobserved confounding.
This implies that the self-normalised IPS (SNIPS) estimator and its extensions that adopt the above control variate to reduce the variance of the IPS estimator, would exhibit the same bias as estimated IPS when unobserved confounders are present~\cite{Swaminathan2015,Joachims2018}.

\section{Empirical Validation of the Effects of Unobserved Confounding on Synthetic Data}
We now describe a guiding example, and provide a notebook that implements the methods described earlier at \href{https://github.com/olivierjeunen/confounding-consequences-2023/}{github.com/olivierjeunen/confounding-consequences-2023/}.
Consider a setting with two possible actions and a binary covariate $X = \{x_0, x_1\}$, following the distribution in Table~\ref{tab:example_cov} (parameterised with $\alpha \in \left[\frac{1}{2},1\right]$).
Rewards are Bernoulli-distributed (Table~\ref{tab:example_reward}).
The logging policy is contextual, taking a suboptimal action with probability $\epsilon \in [0,1]$ (Table~\ref{tab:example_pi0}).
We can map this to an intuitive setting: action $a_{1}$ is of general appeal to the entire population (i.e. $R \perp \!\!\! \perp X | A=a_{1}$); whereas action $a_{0}$, on the other hand, is specifically appealing to a more niche user-base (i.e. $\mathbb{E}[R|X=x_0,A=a_0]>\mathbb{E}[R|X=x_0,A=a_1]$, but $\mathsf{P}(X=x_{0})<\mathsf{P}(X=x_{1})$).
Estimates for logging propensities can be obtained by empirical counting, as in Eq.~\ref{eq:marginal_prop}.
The expected value for these estimated context-independent propensities is shown in Table~\ref{tab:example_pihat}.

\textit{Na\"ive propensity estimation methods suffer from confounding bias.}
We simulate an off-policy estimation setup where we wish to evaluate deterministic policies $\pi_{a}(a)\equiv 1$.
We obtain $N=2\cdot 10^{6}$ samples from the synthetic distribution described in Table~\ref{tab:example}, and compute the confounded estimate $\widehat{R}_{\rm estim-IPS}$, as well as the unobservable ideal IPS estimate $\widehat{R}_{\rm ideal-IPS}$.
We vary both the level of selection bias $\epsilon$ (over the x-axis), and the confounding distribution $\alpha$ (over columns) in Fig.~\ref{fig:results}, where the y-axis shows the estimated difference in rewards from policies $\pi_{a_{1}}$ and $\pi_{a_{0}}$.
We shade the positive and negative regions in the plot to clearly visualise when an off-policy estimator allows us to \emph{correctly} identify the optimal policy, or when it does not.
We observe that the IPS estimator with estimated propensities fails considerably, in that it will incorrectly identify $\pi_{a_{0}}$ as the reward-maximising policy.
Only when $\epsilon$ is sufficiently high (i.e. approaching a uniform logging policy for $\epsilon=0.5$, and hence no confounding is present), $\widehat{R}_{\rm estim-IPS}$ is able to correctly identify $\pi_{a_{1}}$.
This shows that, even in simplified settings, the estimates we obtain from IPS with confounded estimates lead to misleading conclusions.
Furthermore, existing diagnostics cannot detect these problems when they occur.

\begin{table}[t]
\vspace{-2ex}
\begin{subtable}[t]{0.22\textwidth}
    \centering
    \begin{tabular}{cc}
    \toprule
      $\mathsf{P}(X=x_0)$ &  $\mathsf{P}(X=x_1)$ \\
    \midrule
        $1-\alpha$   &  $\alpha$ \\
    \bottomrule
     ~ & ~\\
    \end{tabular}\caption{Covariate distribution}\label{tab:example_cov}
\end{subtable}
\hfill
 \begin{subtable}[t]{0.22\textwidth}
     \centering
     \begin{tabular}{rcc}
     \toprule
    $\mathbf{\mathbb{E}[R|X,A]}$& $a_0$ &  $a_1$ \\
     \midrule
        $x_0$   &  $1.0$ & $0.7$ \\
        $x_1$   &  $0.0$ & $0.7$ \\
     \bottomrule
     \end{tabular}\caption{Reward distribution}\label{tab:example_reward}
 \end{subtable}
 \hfill
 \begin{subtable}[t]{0.25\textwidth}
     \centering
     \begin{tabular}{rcc}
     \toprule
      $\mathbf{\pi_{0}(a|x)}$ & $a_0$ &  $a_1$ \\
     \midrule
        $x_0$   &  $1.0-\epsilon$ & $\epsilon$ \\
        $x_1$   &  $\epsilon$ & $1.0-\epsilon$ \\
     \bottomrule
     \end{tabular}\caption{Logging policy distribution}\label{tab:example_pi0}
 \end{subtable}
  \hfill
 \begin{subtable}[t]{0.25\textwidth}
     \centering
     \begin{tabular}{cc}
     \toprule
      \multicolumn{2}{c}{$\mathbf{\widehat{\pi}_{0}(a)}$}\\
     \midrule
        $a_0$   &  $(1-\epsilon)\alpha + \epsilon (1-\alpha)$\\
        $a_1$   &  $(1-\epsilon)(1-\alpha) + \epsilon \alpha$\\
     \bottomrule
     \end{tabular}\caption{Logging propensity estimates}\label{tab:example_pihat}
 \end{subtable}

\caption{Data distributions for a guiding example that highlights issues with off-policy estimation under unobserved confounding.} \label{tab:example}
\end{table}

\begin{figure}
    \vspace{-5ex}
    \centering
    \includegraphics[width=\linewidth]{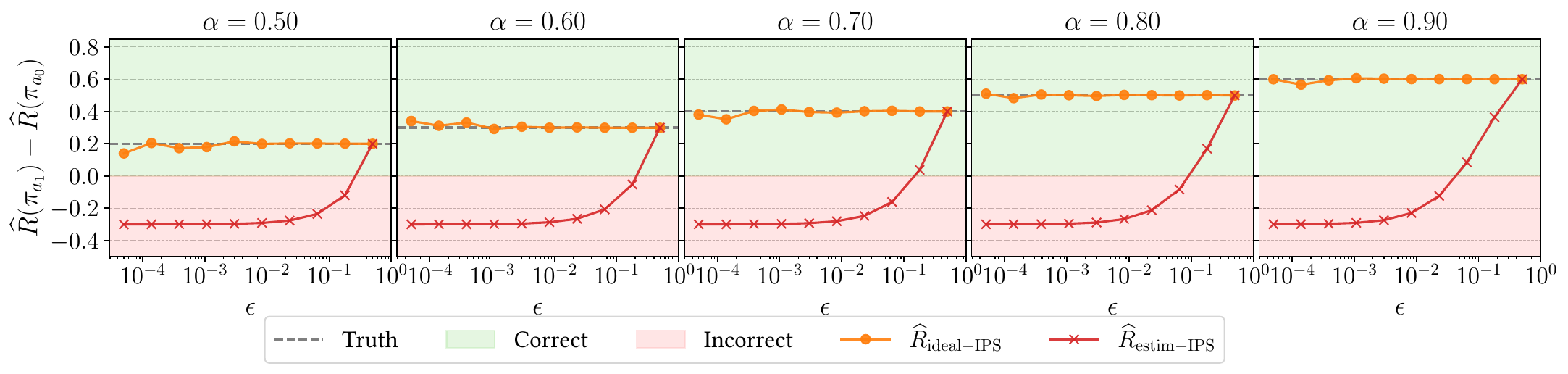}
    \caption{Estimated differences in reward for various estimators. IPS with estimated propensities suffers from unobserved confounding.}
    \label{fig:results}
\end{figure}

\section{Conclusions \& Outlook}
Unobserved confounders lead to biased estimates, both for DM- and IPS-based methods.
This problem has received considerable attention in the research literature for general offline reinforcement learning use-cases, but the literature dealing with these issues in recommendation settings remains scarce.
Our work highlights that this is problematic---especially in cases where propensities are estimated under simplifying independence assumptions.
In doing so, we add to the literature identifying problematic practices that might hamper progress in the field~\cite{Jeunen2019,Krichene2020, Sun2023,Jeunen2023_nDCG}.

\bibliographystyle{ACM-Reference-Format}
\bibliography{bibliography}

\end{document}